\definecolor{dkcbred}{HTML}{AA4B00}
\definecolor{dkcbblue}{HTML}{005C8E}
\definecolor{dkcbgreen}{HTML}{017E5C}
\definecolor{dkcbyellow}{HTML}{B17204}
\newcommand{\rlipsum}[1][1-7]{{\color{red}\lipsum[#1]}}
\DeclarePairedDelimiterX{\norm}[1]{\lVert}{\rVert}{#1}
\DeclarePairedDelimiterX{\vecnorm}[1]{|}{|}{#1}
\newtheorem{theorem}{Theorem}
\newbox{\myorcidaffilbox}
\sbox{\myorcidaffilbox}{\large\includegraphics[height=1.7ex]{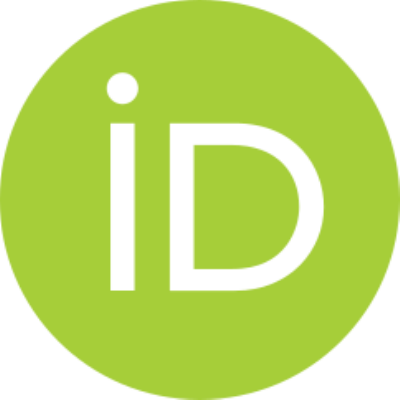}}
\newcommand{\ORCID}[1]{\href{https://orcid.org/#1}{\usebox{\myorcidaffilbox}}}
\begin{document}

\title{Subset-Contrastive Multi-Omics Network Embedding}

\author[1,2]{Pedro Henrique da Costa Avelar\ORCID{0000-0002-0347-7002}}
\author[2]{Min Wu\ORCID{0000-0003-0977-3600}}
\author[1]{Sophia Tsoka\ORCID{0000-0001-8403-1282}}

\affil[1]{Department of Informatics, Faculty of Natural, Mathematical and Engineering Sciences, King's College London, Bush House, WC2B 4BG, London, United Kingdom}
\affil[2]{Institute for Infocomm Research, Agency for Science Technology and Research (A*STAR), 1 Fusionopolis Way, \#21-01 Connexis, 138632, Singapore, Singapore}

\twocolumn[
  \begin{@twocolumnfalse}
\maketitle
\begin{abstract}
\textbf{Motivation:} Network-based analyses of omics data are widely used, and while many of these methods have been adapted to single-cell scenarios, they often remain memory- and space-intensive. As a result, they are better suited to batch data or smaller datasets. Furthermore, the application of network-based methods in multi-omics often relies on similarity-based networks, which lack structurally-discrete topologies. This limitation may reduce the effectiveness of graph-based methods that were  initially designed for  topologies with better defined structures\\
\textbf{Results:} We propose Subset-Contrastive multi-Omics Network Embedding (SCONE), a method that employs contrastive learning techniques on large datasets through a scalable subgraph contrastive approach. By exploiting the pairwise similarity basis of many network-based omics methods, we transformed this characteristic into a strength, developing an approach that aims to achieve scalable and effective analysis. Our method demonstrates synergistic omics integration for cell type clustering in single-cell data. Additionally, we evaluate its performance in a bulk multi-omics integration scenario, where SCONE performs comparable to the state-of-the-art despite utilising limited views of the original data. We anticipate that our findings will motivate further research into the use of subset contrastive methods for omics data. \\
\textbf{Availability:} Source code and environment definitions for the experiments will be made available after publication at \url{github.com/phcavelar}.
\\ \textbf{Contact:} \href{pedro_henrique.da_costa_avelar@kcl.ac.uk}{pedro\_henrique.da\_costa\_avelar@kcl.ac.uk}, \href{email:wumin@i2r.a-star.edu.sg}{wumin@i2r.a-star.edu.sg}, \href{sophia.tsoka@kcl.ac.uk}{sophia.tsoka@kcl.ac.uk}
\\ \textbf{Keywords:} graph contrastive learning, single-cell, multi-omics, autoencoder
\end{abstract}
    \end{@twocolumnfalse}
]

\section{Introduction}

Precision medicine has emerged as a focal point of contemporary biomedical research, fuelled largely by advancements in high-throughput sequencing technologies that enable the characterization of samples at molecular level, often across multiple data types \cite{nicora_integrated_2020}. This growing interest underlies the need for holistic approaches to understand complex diseases such as cancer \cite{bode_precision_2017}, addressing rapid advancements in single-cell and spatial multi-omics profiling technologies \cite{baysoy_technological_2023}. Leveraging these developments, recent efforts have introduced technologies like scGET-seq, which integrate four omics layers, and Spatial ATAC–RNA-seq, which achieves near-cell resolution multi-omics \cite{zhang_spatial_2023}.

While the increasing availability of multi-omics data has enabled novel discoveries, network-based analyses have long been recognized as powerful tools for uncovering the relationships between topological and functional features of biological systems \cite{xu_module_2010,bennett_detection_2015,amiri_souri_novel_2022,fernandez-torras_integrating_2022}. 
Such methods have been applied not only to biological systems at large but also to the analysis of single- and multi-omics data across batch, single-cell, and spatial omics contexts.

Recent advances in machine learning, particularly in deep learning, have significantly expanded the potential of network-based methods. The development of graph embedding techniques \cite{goyal_graph_2018,cui_survey_2019} and knowledge graph embeddings \cite{wang_knowledge_2017,dai_survey_2020,yan_survey_2022} has played a key role in this progress, with many of these methods becoming integral to the standard toolkit for biological analyses \cite{amiri_souri_novel_2022,fernandez-torras_integrating_2022}. Building on these foundations, Graph Contrastive Learning (GCL) methods \cite{ju_survey_2024,wang_contrastive_2024} have emerged as a powerful approach for leveraging the structure of network-based data. Despite these advancements, the application of network analysis in omics data often relies on K-nearest neighbors (KNN) graphs or correlation/interaction graphs, which lack the discrete topological structures that many graph-based methods were originally designed to handle.

Although GCL methods have been applied to multi-omics data, they often fail to account for the limitations of non-concrete graph structures, such as those based on correlation or similarity measures. These types of graphs lack the discrete topological connections that many graph-based methods are designed to leverage. Additionally, in the context of single-cell omics, where sample sizes are large, existing GCL methods that require comparisons between all node pairs (e.g., SERIES from \cite{wang_contrastive_2024}, Subtype-MGTP from \cite{xie_subtype-mgtp_2024} or SC-ZAG from \cite{zhang_sczag_2024}) become computationally infeasible due to their quadratic memory requirements.

In this study, we propose a Subset-Contrastive learning approach tailored to network-based multi-omics data. We introduce SCONE, a method designed to address the challenges posed by approximate graph structures, such as in KNN graphs, by leveraging scalable contrastive learning techniques. Through comparative analysis with two strong baselines, we demonstrate that SCONE achieves competitive performance in both bulk and single-cell multi-omics scenarios. Additionally, we show that, with appropriate hyperparameter selection, SCONE offers reduced memory complexity compared to full-graph methods. While this study focuses on a specific configuration of our model, we hope that our findings will highlight the potential of subset contrastive learning for omics data and inspire further research in this area.

\section{Related Work}

Multi-omics representation learning, particularly for bulk data integration, is a key approach for uncovering complex biological relationships to enhance downstream analyses and predictive modelling. One notable contribution is MOFA \cite{argelaguet_multiomics_2018,argelaguet_mofa_2020}, a widely used statistical framework originally developed for single-cell data but adopted as a baseline for bulk integration. For instance, \cite{lee_variational_2021} demonstrated the feasibility of employing a Product-of-Experts (PoE) model to integrate bulk multi-omics datasets from the TCGA network, achieving improved performance compared to MOFA.

In parallel, generative and contrastive learning methods have attracted attention for multi-view representation learning. In this paper the terms multi-view, multi-modal, and multi-omics are used interchangeably. Subtype-GAN \cite{yang_subtype-gan_2021} introduced a generative adversarial network (GAN) within a multi-modal autoencoder framework, enabling joint representation learning of multi-omics datasets. Subsequent advancements include contrastive learning approaches such as Subtype-DCC \cite{zhao_subtype-dcc_2023}, DMCL \cite{chen_deep_multi-view_2023}, and MOCSS \cite{chen_mocss_2023}, which focus on improving multi-omics integration through contrastive objectives tailored to multi-view datasets. These methods highlight the potential of contrastive and generative approaches to address challenges in multi-omics data integration.

Recent advancements have also emphasized the integration of Graph Contrastive Learning (GCL) techniques into bulk multi-omics data analysis. A prominent example is SubtypeMGTP \cite{xie_subtype-mgtp_2024}, which leverages a two-step algorithm comprising multi-omics-to-protein translation and network-based deep subspace clustering. This method demonstrated significant performance improvements over previous approaches.

With the rapid evolution of single-cell and spatial multi-omics technologies \cite{baysoy_technological_2023}, there has been an increased focus on the development of methods tailored for single-cell datasets. For example, \cite{lotfollahi_multigrate_2021} extended the PoE framework proposed by \cite{lee_variational_2021} to single-cell scenarios, providing robust integration of omics layers. Similarly, TotalVI \cite{gayoso_joint_2021}—a widely recognized multi-modal Variational Autoencoder—has been applied to single-cell RNA and protein data, fostering its integration into mainstream processing pipelines \cite{gayoso_python_2022,virshup_scverse_2023,ergen_scvi-hub_2024}. In the realm of graph-based methodologies, GraphST \cite{long_spatially_2023} employed KNN graphs derived from spatial RNA and transcriptomic data, integrating Deep Graph Infomax (DGI) \cite{velickovic_deep_2019} to enhance spatial clustering. Another method, scZAG \cite{zhang_sczag_2024}, incorporated a Zero-Inflated Negative Binomial Variational Autoencoder \cite{lopez_deep_2018} alongside GCL to advance single-cell clustering.

Despite these advancements, the application of Graph Contrastive Learning to omics data often relies on graph structures derived from correlation or similarity measures, which lack the discrete topologies that many GCL methods were specifically designed to exploit. Existing GCL-based models for single-cell data, such as SERIES \cite{wang_contrastive_2024}, Subtype-MGTP \cite{xie_subtype-mgtp_2024}, and scZAG \cite{zhang_sczag_2024}, require quadratic memory resources, rendering them computationally expensive for large datasets.

In parallel, Subset-Contrastive Learning has emerged as a promising avenue to address scalability challenges in graph-based learning. The SubG-Con method proposed by \cite{jiao_sub-graph_2020} was specifically designed to address the scalability issues inherent in earlier methods such as Deep Graph Infomax (DGI) \cite{velickovic_deep_2019} and Graphical Mutual Information (GMI) \cite{peng_graph_2020}, which often require quadratic memory resources and struggle with large datasets. SubG-Con introduced an efficient approach by partitioning graphs into subgraphs. This was further refined by methods like AdaGCL \cite{wang_adagcl_2022}, which introduced batch-aware view generation, and \cite{xu_graph_2023}, which optimized subset selection using low dissimilarity priority and mutual exclusion principles. However, to the best of our knowledge, these subset-contrastive approaches have not yet been adapted to omics data.

Our work aims to bridge this gap by introducing Subset-Contrastive multi-Omics Network Embedding (SCONE), which leverages subgraph-based contrastive learning to address scalability and computational challenges in multi-omics data integration. Unlike existing methods, SCONE applies subset contrastive techniques to omics datasets, demonstrating robust performance across both single- and multi-omics scenarios while maintaining computational efficiency.

\section{Materials}

The proposed model was applied on two datasets. First, a dataset containing Peripheral Blood Mononuclear Cells (PBMC) from \cite{kotliarov_broad_2020}, with single-cell RNA sequencing (RNA-seq) and surface protein (CITE-seq) measurements was used. Annotations have been derived from CITE-seq clustering and subsequently validated through RNA-seq expression. Following \cite{lotfollahi_multigrate_2021}, quality control was performed, with RNA-seq data normalized to sum to 10,000 and transformed using $\log{(x+1)}$. Protein counts were normalized using the centered log-ratio transformation \cite{stoeckius_simultaneous_2017}. Data were z-scored to mean of 0 and a standard deviation of 1 before being used as input to the neural networks. After preprocessing, the dataset contained 52,117 cells from 20 samples of high vs. low responders. Cell types were clustered and annotated based on surface protein information, resulting in 10 broad cell types at the first level of annotation, as detailed in Table~\ref{tab:dset-kotliarov-cell-types}.

\begin{table}[hptb]
    \centering
    \begin{tabular}{lrr}
    \toprule
    Cell Type & Count & Proportion \\
    \midrule
    Total & 52,117 & $100.00\%$\\
    \midrule
    CD4 naive & 11,127 & $21.35\%$\\
    CD4 memory T & 9,161 & $17.58\%$\\
    Classical monocytes and mDC & 6,631 & $12.73\%$ \\
    B & 5,810 & $11.15\%$ \\
    CD8 memory T & 5,268 & $10.11\%$ \\
    CD8 naive & 4,840 & $9.29\%$ \\
    NK & 4,700 & $9.02\%$ \\
    Unconventional T cells & 3,268 & $6.27\%$ \\
    Non-classical monocytes & 1,072 & $2.06\%$ \\
    pDC & 240 & $0.46\%$ \\
    \bottomrule
    \end{tabular}
    \caption{Cell Type composition of the dataset from \cite{kotliarov_broad_2020}}
    \label{tab:dset-kotliarov-cell-types}
\end{table}

Second, a dataset derived from The Cancer Genome Atlas (TCGA) was used as preprocessed previously \cite{yang_subtype-gan_2021} and \cite{xie_subtype-mgtp_2024}. It comprises five omics layers across seven cancer types: RNA (\texttt{rna}), miRNA (\texttt{miRNA}), methylation (\texttt{meth}), copy number (\texttt{CN}), and protein (\texttt{prot}). The RNA, miRNA, methylation, and copy number layers were preprocessed by \cite{yang_subtype-gan_2021}, while the protein layer was provided by \cite{xie_subtype-mgtp_2024}. The dataset encompasses seven cancer types: Bladder Cancer (BLCA), Breast Cancer (BRCA), Kidney Clear Cell Cancer (KIRC), Lung Adenocarcinoma (LUAD), Skin Cutaneous Melanoma (SKCM), Stomach Adenocarcinoma (STAD), and Uterine Corpus Endometrial Carcinoma (UCEC). In total, these sub-datasets include 3,013 samples, with detailed information presented in Table~\ref{tab:dset-tcga}.

\begin{table}[hptb]
    \footnotesize
    \centering
    \begin{tabular}{crrrr}
    \toprule
        Cancer & Samples & Male/Female \% & Age & Clusters \\
    \midrule
        BLCA & 331 & 74.92~/~25.08 & $67.88\pm10.63$ & 5 \\
        BRCA & 829 & *1.09~/~98.79 & $58.22\pm13.39$ & 5 \\
        KIRC & 438 & 65.98~/~34.02 & $60.38\pm12.07$ & 4 \\
        LUAD & 346 & 46.53~/~53.47 & $64.57\pm~9.86$ & 3 \\
        SKCM & 333 & 59.46~/~40.54 & $58.13\pm15.79$ & 4 \\
        STAD & 329 & 66.57~/~33.43 & $65.41\pm11.01$ & 3 \\
        UCEC & 407 &  0.00~/~100.0 & $63.64\pm11.26$ & 4 \\
    \bottomrule
    \end{tabular}
    \caption{The composition of the TCGA dataset used for the bulk analysis. Age is given as $\text{mean}\pm\text{standard deviation}$. The number of clusters is derived from the clustering results reported by \cite{yang_subtype-gan_2021} and \cite{xie_subtype-mgtp_2024}. * Gender totals do not sum to 100\% due to missing values.}
    \label{tab:dset-tcga}
\end{table}

Although \cite{yang_subtype-gan_2021} and \cite{xie_subtype-mgtp_2024} also included data for Uveal Melanoma (UVM), this dataset was excluded due to its limited sample size (n=12 for the \texttt{CN} and protein layers), which was deemed insufficient for the purposes of this study. Furthermore, since the strongest baseline model performed poorly on this dataset—likely due to its small size relative to the other datasets—its exclusion is not expected to impact the overall conclusions of this work. Unlike \cite{xie_subtype-mgtp_2024}, each dataset was treated independently, with no information shared across datasets. To evaluate the subtyping capabilities of the proposed model, only samples containing all five omics layers were selected. Additionally, during downstream processing, samples missing information were excluded.

\section{Methodology}

\begin{figure*}
    \centering
    \includegraphics[width=.95\linewidth]{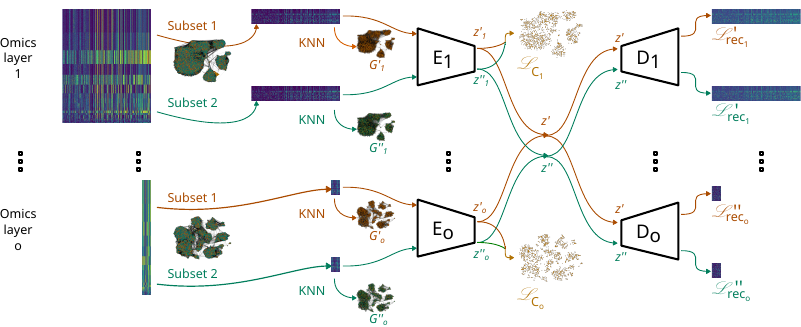}
    \caption{A schematic representation of the proposed SCONE model. The model begins by subsetting the original dataset, sampling $k_{s}$ samples for each subset in each omics layer, resulting in {\color{dkcbred}subset 1} and {\color{dkcbgreen}subset 2}. For each subset, a K-nearest neighbors graph is constructed for each omics layer. The omics measurements and graph of each subset are then input into an omics-specific GAT encoder $E_{i}$ to produce the latent representation $z_{i} = E_{i}(x_{i}, G_{i})$. Subset-specific representations are denoted as $z^{\prime}{i}$ for {\color{dkcbred}subset 1} and $z^{\prime\prime}{i}$ for {\color{dkcbgreen}subset 2}. The outputs from the encoder layers are combined into a shared representation $z$, which is subsequently passed through the omics-specific GAT decoder $D_{i}$ to reconstruct each omics layer for each subset. We optimise the model through the reconstruction loss $\mathcal{L}_{\text{rec}_{i}}$. Simultaneously, the {\color{dkcbyellow}overlapping nodes} between subsets are leveraged for the contrastive loss $\mathcal{L}_{C_{i}}$, which utilizes the encoder outputs and neighborhood information for both overlapping and non-overlapping nodes.
    Components with a {\color{dkcbred}orange colour} show the trajectory under {\color{dkcbred}subset 1}, equally the {\color{dkcbgreen}green colour} represents {\color{dkcbgreen}subset 2}. The components between the encoders and decoders in {\color{dkcbyellow}yellow} show the {\color{dkcbyellow}overlap} between both sets, which is used for our {\color{dkcbyellow}subset contrastive loss}. The heatmaps shown correspond to the \cite{kotliarov_broad_2020} dataset, with omics layer 1 being the RNA data and Omics layer o being CITE-seq data. For readability we reduce the number of samples, RNA probes, and CITE-seq probes by a factor of roughly 101, 10, and 2, respectively, with subsets further reducing the number of samples by approximately 10. }
    \label{fig:scone}
\end{figure*}

This section outlines the methodology employed in our experiments. We first provide a detailed description of the Graph Attention Network (GAT) layer. Subsequently, we present the framework of our proposed model. Finally, we describe the experimental pipelines and the performance metrics utilized. Additional experimental details can be found in \nameref{sup}'s \nameref{sup:exp} section.

\subsection{Graph Attention Networks (GATs)}

Graph Attention Networks (GATs) are a Graph Neural Network architecture introduced by \cite{velickovic_graph_2018}. They consist of multiple graph attention layers, each of which processes an input matrix $h^{l-1} \in R^{n \times d_{l-1}}$ and a graph $G = (V,E)$ to produce an output matrix $h^{l} \in R^{n \times d_{l}}$. Attention factors $\alpha_{i,j}$ are computed between a node $i \in V$ and its neighbours $j \in \mathcal{N}(i)$, where $\mathcal{N}(i)$ is the neighbourhood of $i$ defined by the edge list $E$. Specifically, $\exists \alpha_{i,j} \iff \exists (j,i) \in E$. These factors are typically computed using Equation~\ref{eq:attention}, where $e:\mathbb{R}^{d_{l-1}} \times \mathbb{R}^{d_{l-1}} \rightarrow \mathbb{R}^{1}$ is a function that quantifies the similarity between two input vectors.

\begin{equation} \label{eq:attention}
    \alpha_{i,j} = \frac{\exp{(e(h^{l-1}_{i},h^{l-1}_{j})}}{\sum_{k \in \mathcal{N}(i)}{\exp{(e(h^{l-1}_{i},h^{l-1}_{k})}}}.
\end{equation}

These attention factors weight the neighborhood aggregation function, enabling the computation of a node's updated representation as a weighted sum of its neighbors' features, transformed by a nonlinear layer with weight $W_{l}$ and nonlinearity $\sigma$, as shown in Equation~\ref{eq:gat}:

\begin{equation} \label{eq:gat}
\begin{aligned}
    h^{l} &= \operatorname{GAT}(h^{l-1},G) \\
    \text{where~} h^{l}_{i} &= \sigma\left(\sum_{j \in \mathcal{N}(i)} \alpha_{i,j} h^{l-1,\prime}_{j}\right) \\
    \text{and~} h^{l-1,\prime} &= h^{l-1} \times W_{l}.
\end{aligned}
\end{equation}

In the original GAT implementation \cite{velickovic_graph_2018}, the attention factor is defined as in Equation~\ref{eq:attention-velickovic}, where $W \in \mathbb{R}^{d_{h}\times d_{h}^{\prime}}$, and $a \in \mathbb{R}^{2d^{\prime} \times 1}$ are the learnable parameters, and $\|$ denotes concatenation along the feature dimension:

\begin{equation} \label{eq:attention-velickovic}
    e(h_{i},h_{j}) = \operatorname{LeakyReLU}([(h_{i}\times W)\|(h_{j}\times W)] \times a).
\end{equation}

However, it has been demonstrated that the attention mechanism originally developed for GAT employs a \emph{static attention} mechanism. This design limits its adaptability, as it cannot attend to different keys (i.e., vectors indexed by $j$, representing the neighbors of a node) when provided with varying queries (i.e., vectors indexed by $i$, representing the targets of the edges). To address this limitation, we employ the $\operatorname{GAT}$ operator implemented through the \texttt{GATv2Conv} module \cite{brody_how_2022} in the \texttt{pytorch-geometric} library \cite{fey_fast_2019} (version \texttt{2.5.2}). This operator introduces a \emph{dynamic attention} mechanism, defined in Equation~\ref{eq:attention-brody}, where a learned linear transformation $W \in \mathbb{R}^{2d \times d^{\prime}}$ is applied after concatenation, followed by a nonlinearity and another linear layer $a \in \mathbb{R}^{d \times 1}$. This approach effectively employs a multilayer perceptron (MLP), which is a universal function approximator \cite{hornik_multilayer_1989}, to compute the attention scores:

\begin{equation} \label{eq:attention-brody}
    e(h_{i},h_{j}) = \operatorname{LeakyReLU}([(h_{i}\|h_{j}W) \times W]) \times a.
\end{equation}

\subsection{Subset-Contrastive multi-Omics Network Embedding (SCONE) framework}

The Subset-Contrastive multi-Omics Network Embedding (SCONE) framework was developed by adapting a Gaussian-prior Product of Experts (PoE) \cite{hinton_training_2002} Variational Autoencoder (VAE) \cite{lee_variational_2021} into a Multi-modal Deep Autoencoder \cite{ngiam_multimodal_2011}. Unlike \cite{ngiam_multimodal_2011}, SCONE does not rely on pre-training using Restricted Boltzmann Machines (RBMs). Instead, all pre-training is performed through contrastive learning inspired by Deep Graph Infomax (DGI) \cite{velickovic_deep_2019}. The encoder and decoder architectures employ Graph-based modules, specifically using Graph Attention Networks (GATs) \cite{velickovic_graph_2018,brody_how_2022}. A detailed description of the framework is provided below and visualized in Figure~\ref{fig:scone}.

The SCONE framework handles $\left|O\right|$ Omics views, where each omics view $o \in O$ contains a data table $x_{o} \in \mathbb{R}^{n_{o} \times d_{o}}$ with $n_{o}$ samples $i \in S_{o}$ (e.g. single cells or bulk samples) and $d_{o}$ features (e.g., gene or protein expression probes).  The framework’s goal is to construct individual latent spaces $z_{o} \in \mathbb{R}^{n_{o} \times d_{z}}$ for each omics view, which are then combined into a common latent space $z  \in \mathbb{R}^{n \times d_{z}}$. From $z$, the model reconstructs each input table $\hat{x}_{o}$ and then applies a reconstruction loss $\mathcal{L}_{\text{rec}_{o}}(x_{o}, \hat{x}_{o})$ for each omics layer.

To achieve this, omics-specific encoders $E_{o}$ are constructed using GATs. As Graph Attention Networks require a graph as input and omics data do not inherently form a network, a KNN-graph $G_{o} = (V_{o},E_{o})$ is generated for each omic layer, connecting each sample $i \in V_{o}$ to its $k_{k}$ nearest neighbours. For all experiments, $k_{k}=15$ is used. Each omics measurement produces a hidden latent space as described in Equation~\ref{eq:encoder-full}.

\begin{equation}\label{eq:encoder-full}
\begin{aligned}
     z_{o} &= E_{o}(x_{o}, G_{o})~\forall o \in O\\
    \text{where~} G_{o} &= \operatorname{KNN}(x_{o},k_{k}) \\
    \text{and~} E_{o} &= (\operatorname{GAT}_{E,o,1} \cdots \operatorname{GAT}_{E,o,k})(x_{o}, G_{o}).
\end{aligned}
\end{equation}

These individual latent spaces $z_{o}$ are combined into a joint latent space $z$ through a quantity- and order-invariant pooling operation $p$, as shown in Equation~\ref{eq:latent-full}. Considering that each $z_{o}$ is an element in an Abelian group and $p(a,b)$ being the operation, we can then treat missing values by substituting them with the group's identity $\mathbb{I}$. Without loss of generality, we consider $p$ to be the sum between vectors, and use an empty array of zeros $\mathbb{I} \in \mathbb{R}^{d_{z}}, \mathbb{I}[j] = 0~\forall j$. This adaptation using an Abelian group, alongside other properties that we discuss below, allows us to adapt the Product-of-Experts framework into a non-variational model \cite{hinton_training_2002,lee_variational_2021}.

\begin{equation}\label{eq:latent-full}
\begin{aligned}
     z[i] &= p(\{id(z_{o},i)~\forall o \in O\}) \\
    \text{where~} id(z_{o},i) &= \begin{dcases}
        z_{o}[i] & \text{if~} i \in S_{o}, \\
        \mathbb{I} & \text{otherwise}.
    \end{dcases}
\end{aligned}
\end{equation}

From $z$, each omics layer is reconstructed using omics-specific graph decoders $D_{o}$, which also utilise GATs. Reconstruction is described in Equation~\ref{eq:decoder-full}:

\begin{equation}\label{eq:decoder-full}
\begin{aligned}
     \hat{x}_{o} &= D_{o}(z_{o}, G_{o})~\forall o \in O\\
    \text{where~} G_{o} &= \operatorname{KNN}(x_{o},k_{k}) \\
    \text{and~} D_{o} &= (\operatorname{GAT}_{D,o,1} \cdots \operatorname{GAT}_{D,o,k})(x_{o}, G_{o}).
\end{aligned}
\end{equation}

The reconstruction loss $\mathcal{L}_{\text{rec}_{o}}(x_{o}, \hat{x}_{o})$ is applied to each of the reconstructed output where ground-truth is available. This allows the network to reconstruct each omics layer from its available data from a joint latent space which might not contain all information. For normally-distributed input, the mean-squared error (MSE) is used (Equation~\ref{eq:rec-mse}). However, we have also implemented in our software losses for Negative Binomial (NB), Zero-Inflated Negative Binomial (ZINB) and Binary inputs (which can be seen in Equation~\ref{eq:rec-bce}, where we have a probability matrix $p_{o}$ instead of $x_{o}$). These reconstruction functions are commonplace in omics data (e.g. single-cell RNA measurements are commonly represented through an NB/ZINB distribution \cite{lopez_deep_2018,gayoso_joint_2021,lin_clustering_2022}, and methylation beta-values can be interpreted as binary values or probabilities \cite{choi_methcancer-gen_2020}).

\begin{equation} \label{eq:rec-mse}
    \mathcal{L}_{\text{rec-mse}_{o}}(x_{o}, \hat{x}_{o}) = \frac{1}{n_{o}\times d_{o}} \sum_{i=1}^{n_{o}} \sum_{j=1}^{d_{o}} (x_{o_{i,j}} - \hat{x}_{o_{i,j}})^{2}.
\end{equation}

\begin{equation} \label{eq:rec-bce}
\begin{aligned}
    \mathcal{L}_{\text{rec-bce}_{o}}(x_{o}, \hat{x}_{o}) &= \frac{1}{n_{o}\times d_{o}} \sum_{i=1}^{n_{o}} \sum_{j=1}^{d_{o}} (p_{o_{i,j}} \log{\hat{p}_{o_{i,j}}}) + (\overline{p}_{o_{i,j}} \log{\overline{\hat{p}}_{o_{i,j}}}) \\
    \text{where~} & \hat{p}_{o_{i,j}} = \sigma(\hat{x}_{o_{i,j}}) \text{~and~} \overline{\hat{p}}_{o_{i,j}} = 1-\sigma(\hat{x}_{o_{i,j}}).
\end{aligned}
\end{equation}

However, using the full input matrix and induced connectivity Graph may be computationally infeasible, particularly during training, when the number of samples is large, as is often the case in single-cell multi-omics data. To address this, we propose our Subset-Contrastive framework. Instead of using the full omics input matrix $x_{o}$, we subset each omics matrix into two overlapping subsets of samples, $s_1$ and $s_2$, during each training epoch. These subsets are defined such that $\forall i \in s_{j}, i \in \bigcup_{o} S_{o}$, $s_1 \cap s_2 \neq s_1$, $\left|s_1 \cup s_2\right| > k_{s}$. This approach ensures that $s_{1}$ and $s_{2}$ share some samples but are not identical.

Using these subsets, we construct two versions of the omics input matrices: $x^{\prime}_{o} = x_{o}[s_1]$ and $x^{\prime\prime}_{o} = x_{o}[s_2]$. For each subset, we generate corresponding KNN graphs $G^{\prime}_{o}$ and $G^{\prime\prime}_{o}$, resulting in two sets of latent representations of the omics data: $z^{\prime}$ and $z^{\prime}_{o}~\forall o \in O$ for the first subset and $z^{\prime\prime}$ and $z^{\prime\prime}_{o}~\forall o \in O$ for the second subset.

Inspired by DGI \cite{velickovic_deep_2019}, we define a per-omics layer subset-contrastive loss $\mathcal{L}_{\text{C}_{o}}(z^{\prime}_{o}, z^{\prime\prime}_{o})$ as the Binary Cross Entropy of the similarity of the neighbourhood average of positive and negative examples.  By applying this contrastive loss to each individual omics layer representation rather than the joint latent space, the network can learn complementary information about each omics layer independently, which is then aggregated into the joint representation.

Positive examples $s_{+}$ are defined as samples $i$ that appear in both subsets, $i \in (s_1 \cup s_2)$. Negative examples are defined separately for each subset: $s_{-,1}$ includes samples $i \in (s_1 \setminus s_2)$ and $s_{-,2}$ includes samples $j \in (s_2 \setminus s_1)$. to compute probabilities for positive and negative pairs, as shown in Equation~\ref{eq:bilinear}. Here $\dagger$ and $\ddagger$ represent either $\prime$ or $\prime\prime$, depending on the subset being processed.

\begin{equation} \label{eq:bilinear}
\begin{aligned}
    C_{o}(i,\dagger,j,\ddagger) &= n^{\dagger}_{o}[i] \times W_{C,o} \times {n^{\ddagger}_{o}[j]}^{\top}\\
    \text{where~} n_{o}^{\dagger}[i] &= \frac{1}{\left|\mathcal{N}(i)\right|} \sum_{j \in \mathcal{N}(i)} z^{\dagger}_{o}[j].
\end{aligned}
\end{equation}

The contrastive loss is then computed as the BCE loss between the logits of positive and negative pairs, as shown in Equation~\ref{eq:contrastive}. This formulation encourages the model to maximize mutual information between nodes that are shared between subsets (and are truly supposed to be neighbours) while minimising mutual information between randomly paired nodes from different subsets, with the rationale that these neighbours are sensitive to random sampling of the nodes.

\begin{equation} \label{eq:contrastive}
\begin{aligned}
    \mathcal{L}_{\text{C}_{o}}(z^{\prime}_{o}, z^{\prime\prime}_{o}) = \frac{1}{k_{s}} (
    \sum_{i \in (s_1 \cup s_2)} & \mathbb{E}_{G^{\prime}_{o},G^{\prime\prime}_{o}}\left[\log{C(i,\prime,i,\prime\prime)}\right] + \\
    \sum_{
    \substack{(i,j),~i \in (s_1 \setminus s_2),\\~~j \in (s_2 \setminus s_1)}} & \mathbb{E}_{G^{\prime}_{o},G^{\prime\prime}_{o}}\left[\log{C(i,\prime,j,\prime\prime)}\right] ).
\end{aligned}
\end{equation}

Lastly, we use a reconstruction loss, with two examples provided in Equations~\ref{eq:rec-mse} and~\ref{eq:rec-bce}, by concatenating ($|$) the input and reconstruction matrices for each of the subsets $s_1$ and $s_2$. This enables our model to learn how to rebuild information from the incomplete graphs $G^{\prime}$ and $G^{\prime\prime}$. We then compute a weighted sum of the reconstruction losses and the contrastive losses, with the reconstruction losses weighted by $\alpha$ and the contrastive losses by $\beta$. This formulation yields the total loss $\mathcal{L}$, as defined in Equation~\ref{eq:total-loss}.

\begin{equation}\label{eq:total-loss}
    \mathcal{L} = \alpha\left(\sum_{o \in O} \mathcal{L}_{\operatorname{rec}_{o}}(x^{\prime}_{o}\|x^{\prime\prime}_{o},\hat{x}^{\prime}_{o}\|\hat{x}^{\prime\prime}_{o})\right) + \beta\left(\sum_{o \in O} \mathcal{L}_{\operatorname{C_{o}}}(z^{\prime}_{o},z^{\prime\prime}_{o})\right).
\end{equation}

Overall, this framework provides a scalable and efficient approach for learning joint representations from multi-omics data, even in scenarios with large sample sizes or incomplete data.

\subsection{Complexity Analysis for Subset Contrastive Learning}

In this section, we provide a time and space complexity analysis for Subset Contrastive Learning, demonstrating that our method achieves improvements over full-batch network training.  Most modern graph-based deep learning frameworks operate using edge-based operations for graph neural networks. In our analysis, we will consider a graph $G=(V,E)$, where $n=|V|$, $m=|E|$ and where $d$ is the number of features in our network.

\begin{theorem}
    For a sampling rate $k_{s} \leq \frac{n}{2}$ and linear or superlinear operations, the memory required to store the values is at most a constant factor greater than that required for the full graph with $n$ nodes.
\end{theorem}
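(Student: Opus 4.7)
The plan is to compare the peak working memory of the subset pipeline against the full-graph pipeline tensor-by-tensor, and show that each subset cost is bounded by the corresponding full-graph cost up to a small absolute constant. The structural fact I would exploit is that a training epoch materialises at most two subsets with $|s_1|, |s_2| \leq k_s$, so the aggregate node count held in memory at any instant is $|s_1| + |s_2| \leq 2 k_s$, and by the hypothesis $k_s \leq n/2$ this is at most $n$. All downstream tensors -- per-node features of dimension $d$, KNN edge lists of length $k_k \cdot |s_j|$, GAT attention weights along those edges, pooled latents, decoder outputs, and contrastive kernel values -- can be written as functions of this per-subset node count, with $d$ and $k_k$ treated as constants shared across both pipelines.

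Next I would split into the two regimes the statement admits. For a linear cost $f_{\mathrm{lin}}(k) = a k$ (e.g.\ node features $O(k_s d)$ or KNN edge lists $O(k_s k_k)$), the bound is immediate: $f_{\mathrm{lin}}(|s_1|) + f_{\mathrm{lin}}(|s_2|) \leq 2 a k_s \leq a n = f_{\mathrm{lin}}(n)$. For a superlinear cost $f_{\mathrm{sup}}(k) = a k^c$ with $c > 1$ (e.g.\ the $O(k^2)$ pairwise-logit storage implicit in the bilinear form $C_o$ of Eq.~\ref{eq:bilinear} together with its DGI-style negative sampling), I would apply $|s_j| \leq k_s \leq n/2$ to get $f_{\mathrm{sup}}(|s_1|) + f_{\mathrm{sup}}(|s_2|) \leq 2 a (n/2)^c = 2^{1-c} a n^c \leq f_{\mathrm{sup}}(n)$, where the final inequality uses $c \geq 1$. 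In both cases the absolute constant is at most one, so in particular it is bounded.

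The only real obstacle is delimiting the set of operations that the claim quantifies over, because the subset pipeline instantiates several tensors with no exact counterpart in the full-graph pipeline, notably the overlap-driven contrastive logits and the per-subset KNN graphs. I would handle this by enumerating every tensor materialised in a single forward/backward pass, classifying each as a monomial in $|s_j|$, and applying the case analysis above to the monomial degree. The global bound then follows by summing the per-tensor bounds and taking the maximum of the resulting coefficients; since the number of tensor classes in the pipeline is fixed and independent of $n$, the overall constant is absolute, giving the theorem.
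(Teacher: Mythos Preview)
Your proposal is correct and follows essentially the same approach as the paper: both split into the linear and superlinear cases and use the identity $2\cdot(n/2)^{c}=2^{1-c}n^{c}\leq n^{c}$ for $c\geq 1$, with the paper additionally carrying an additive constant through the big-$O$ bookkeeping. Your treatment is somewhat more thorough in that you explicitly enumerate the tensor classes and argue that the number of such classes is $n$-independent, whereas the paper simply posits a generic operation $f$ of given polynomial degree and leaves the aggregation implicit.
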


\begin{proof}
    Given $k_{s} = \frac{n}{2}$, we have two graphs, $G^{\prime} = (V^{\prime},E^{\prime})$ and $G^{\prime\prime} = (V^{\prime\prime},E^{\prime\prime})$, each of size $k_{s}$. Since $G^{\prime}$ and $G^{\prime\prime}$ are equally sized, we can assume, without loss of generality, that all operations take double the time and space required by $G^{\prime}$.
    By definition, we have $|V^{\prime}|=k_{s}=\frac{n}{2}$ nodes in our subsampled graphs. Using k-nearest neighbour (KNN) graphs with $k_{k}$ edges per node, the number of edges in the subsampled graphs is $|E^{\prime}|\approx k_{s}k_{k}=\frac{k_{k}n}{2}$.
    Therefore, for any operation $f$ takes linear time/space w.r.t. $n$ ($O(f)=O(n+c)$) or $m$ ($O(f)=O(m+c)$), the time required for both subgraphs is:
    \begin{equation}
        O(f^{\prime}) = O(2\times (\frac{n}{2}+c)) = O(n+2\times c).
    \end{equation}
    equivalently for operations depending on the number of edges. Therefore $O(f)$ and $O(f^{\prime})$ are equal up to a constant factor.
    For operations with superlinear complexity ($k_{p}>1$), the time required becomes:
    \begin{equation}
        O(f^{\prime}) = O(2\times (({\frac{n}{2}})^{k_{p}}+c)) = O( {n}^{k_{p}} \times {2}^{1-k_{p}} + 2\times c)).
    \end{equation}
    Since $k_{p}>1$, $1-k_{p}<0$, meaning that the multiplicative factor ${2}^{1-k_{p}}$ reduces overall growth, while the constant factor increases by at most 2. A similar argument can be followed for other cases.
\end{proof}

Many constrastive learning methods require either a linear number of comparisons (e.g. DGI \cite{velickovic_deep_2019}) or a quadratic number of comparisons, as in cases where each node is compared with every other node (e.g. \cite{wang_contrastive_2024,xie_subtype-mgtp_2024,zhang_sczag_2024}). Our analysis demonstrates that, under a sampling size smaller than $\frac{n}{2}$, our method requires less memory for these linear-to-superlinear contrastive learning strategies. This reduction in memory usage, although only done up to a constant depending on $\frac{k_{s}}{n}$, is particularly crucial for datasets with large sample sizes, where the existing algorithms become computationally infeasible. For instance, we observed that the algorithm developed by \cite{wang_contrastive_2024} exhausted our memory ($\geq 64GB$) resources when applied to the dataset from \cite{kotliarov_broad_2020}.

\section{Results}

Table~\ref{tab:results-kotliarov} highlights the strong performance of our model, SCONE, in comparison to the baseline, TotalVI, on the Kotliarov2020 dataset \cite{kotliarov_broad_2020}. SCONE achieved the highest average Adjusted Mutual Information (AMI) and Adjusted Rand Index (ARI) values when leveraging both RNA and CITE-seq omics layers with the Leiden clustering algorithm. Although differences were not statistically significant compared to TotalVI under the same conditions, SCONE consistently demonstrated superior average performance within each clustering algorithm. Notably, SCONE RNA+CITE excelled in nearly all scenarios, with the exception of AMI under the Louvain algorithm, where TotalVI using only CITE-seq slightly outperformed.

Interestingly, TotalVI exhibited higher average AMI and ARI when using only CITE-seq data, compared to its performance with RNA or both omics layers combined. In contrast, SCONE consistently achieved the best performance when integrating both RNA and CITE-seq, followed by using only CITE-seq, and lastly, RNA alone.

\begin{table*}
    \centering
    \begin{tabular}{ccrrrr}
         \toprule
        \multirow{2}{*}{Model} & \multirow{2}{*}{Views} & \multicolumn{2}{c}{Louvain} & \multicolumn{2}{c}{Leiden} \\
        & & AMI & ARI & AMI & ARI \\
        \midrule
        \multicolumn{6}{c}{KOTLIAROV2020 \cite{kotliarov_broad_2020}} \\
        \midrule
        \multirow{3}{*}{TotalVI \cite{gayoso_joint_2021}}
        & RNA & $.692\pm.015$ & $.561\pm.029$ & $.703\pm.015$ & $.570\pm.030$ \\
        & CITE & $\mathbf{.781\pm.007}$ & $\mathit{.630\pm.022}$ & $^{\dagger}\mathit{.795\pm.017}$ & $.651\pm.045$ \\
        & RNA+CITE & $^{\dagger}\mathit{.746\pm.160}$ & $^{\dagger}\mathit{.618\pm.208}$ & $^{\dagger}\mathit{.756\pm.163}$ & $^{\dagger}\mathit{.639\pm.214}$ \\
        \multirow{3}{*}{SCONE (ours)}
        & RNA & $.514\pm.033$ & $.361\pm.048$ & $.567\pm.040$ & $.458\pm.052$ \\
        & CITE & $.754\pm.018$ & $.480\pm.056$ & $.777\pm.008$ & $.539\pm.023$ \\
        & RNA+CITE & $.765\pm.017$ & $\mathbf{.654\pm.038}$ & $\mathbf{^{*}.800\pm.014}$ & $^{*}\mathbf{.698\pm.015}$ \\
         \bottomrule
    \end{tabular}
    \caption{Adjusted Mutual Information (AMI) and Adjusted Rand Index (ARI) for each tested model in the Kotliarov2020 dataset \cite{kotliarov_broad_2020}. Values are reported as $\text{mean}\pm\text{standard deviation}$. The highest value for each metric and clustering algorithm is highlighted in \textbf{bold}. \textit{Italicised} values indicate results that were not statistically significantly different ($<0.05$) from the highest value on a two-tailed t-test, which could either be due to a tie or lack of samples. Symbols $^{*}$ and $^{\dagger}$ are used to denote the best value and non-significantly different values, respectively, across each metric.}
    \label{tab:results-kotliarov}
\end{table*}

Table~\ref{tab:results-tcga} showcases the strong performance of SCONE and SubtypeMGTP on the TCGA dataset. SCONE demonstrated its versatility and effectiveness, achieving the highest overall mean $-\log_{10}(p)$ values across the datasets. Notably, SCONE's multi-omics model achieved the highest average $-\log_{10}(p)$ values in four of the seven datasets, including a statistically significant improvement over SubtypeMGTP in the SKCM dataset ($p<0.05$). 

SCONE's single-omics models also showcased remarkable performance, outperforming both its multi-omics counterpart and SubtypeMGTP in five datasets based on $-\log_{10}(p)$ values. In BRCA, KIRC, and LUAD, the RNA-only SCONE model achieved significantly better results than all other models ($p<0.05$). For SKCM, SCONE RNA-only outperformed SubtypeMGTP and was comparable to its multi-omics version. In UCEC, SCONE using only the methylation layer performed on par with both its multi-omics model and SubtypeMGTP, highlighting its robustness even in single-omics scenarios.

Although SubtypeMGTP achieved the highest average $-\log_{10}(p)$ in BLCA, the differences were not statistically significant when compared to SCONE's multi-omics or single-omics models using RNA or miRNA. In STAD, SubtypeMGTP significantly outperformed SCONE's single-omics models ($p<0.05$) but did not achieve a significant improvement over SCONE's multi-omics model, further emphasising the competitive nature of SCONE's performance.

\begin{table*}
    \centering
    \scriptsize
    \begin{tabular}{lrrrrrrr}
    \toprule
    model & BLCA & BRCA & KIRC & LUAD & SKCM & STAD & UCEC \\
    \midrule
    SCONE & $^{\dagger}\mathit{1.92}\pm0.30 (2.26)$ & $\mathbf{1.18}\pm0.38 (1.59)$ & $\mathbf{6.12}\pm1.60 (9.84)$ & $\mathit{0.77}\pm0.62 (2.14)$ & $^{\dagger}\mathbf{4.69}\pm0.78 (5.75)$ & $^{\dagger}\mathit{2.40}\pm0.97 (3.80)$ & $^{\dagger}\mathbf{5.92}\pm0.65 (6.98)$ \\
    SubtypeMGTP & $^{*}\mathbf{2.04}\pm0.57 (2.71)$ & $\mathit{0.94}\pm0.44 (1.64)$ & $\mathit{4.76}\pm2.95 (9.12)$ & $\mathbf{0.98}\pm0.34 (1.62)$ & $2.59\pm0.72 (4.06)$ & $^{*}\mathbf{2.75}\pm1.20 (4.60)$ & $^{\dagger}\mathit{5.14}\pm1.52 (8.28)$ \\
    \midrule
    SCONE (\texttt{rna}) & $^{\dagger}1.66\pm0.40$ (2.28) & $^{*}1.81\pm0.45$ (2.51) & $^{*}10.16\pm2.25$ (13.97) & $^{*}2.95\pm1.23$ (4.53) & $^{*}4.85\pm1.18$ (6.49) & $0.61\pm0.45$ (1.44) & $5.22\pm0.99$ (6.59) \\
    SCONE (\texttt{prot}) & $1.17\pm0.61$ (2.54) & $1.14\pm0.64$ (2.04) & $7.46\pm1.41$ (9.31) & $0.19\pm0.18$ (0.51) & $2.20\pm0.52$ (3.02) & $1.65\pm0.33$ (2.28) & $4.33\pm0.85$ (5.22) \\
    SCONE (\texttt{miRNA}) & $^{\dagger}1.84\pm0.51$ (2.54) & $0.52\pm0.30$ (0.98) & $4.43\pm2.47$ (10.12) & $0.64\pm0.42$ (1.35) & $3.32\pm0.55$ (3.97) & $0.75\pm0.45$ (1.35) & $2.06\pm0.72$ (3.35) \\
    SCONE (\texttt{meth}) & $0.66\pm0.20$ (0.86) & $0.70\pm0.25$ (0.94) & $3.83\pm1.32$ (5.64) & $1.39\pm1.12$ (3.19) & $0.41\pm0.29$ (0.84) & $0.09\pm0.09$ (0.27) & $^{*}6.47\pm1.14$ (7.98) \\
    SCONE (\texttt{CN}) & $0.08\pm0.07$ (0.20) & $0.15\pm0.07$ (0.26) & $2.83\pm0.92$ (4.22) & $0.56\pm0.44$ (1.41) & $0.46\pm0.20$ (0.68) & $0.45\pm0.39$ (1.11) & $1.08\pm1.47$ (4.64) \\
    \bottomrule
    \end{tabular}
    \caption{Survival logrank test $-\log_{10}(p)$ values for the TCGA dataset \cite{xie_subtype-mgtp_2024} using $k$ clusters with a predefined $k$ \cite{yang_subtype-gan_2021,xie_subtype-mgtp_2024}. Values are presented as $\text{mean}\pm\text{standard deviation} (\text{max})$. The highest value for each dataset in the multi-omics setting is highlighted in \textbf{bold}. \textit{Italicised} values indicate results that were not statistically significantly different ($<0.05$) from the highest value on a two-tailed t-test.
    SCONE's results are presented both for the multi-omics setting, where use only the RNA, Protein and miRNA layers, and single-omics settings, while we follow \cite{xie_subtype-mgtp_2024}'s description for SubtypeMGTP. 
    Symbols $^{*}$ and $^{\dagger}$ are used to denote the best value and non-significantly different values, respectively, when considering all single-omics ablation and multi-omics results.
    }
    \label{tab:results-tcga}
\end{table*}

\section{Discussion}

SCONE demonstrated robust performance across multiple datasets, showcasing its ability to effectively integrate multi-omics data and preserve biological information. On the Kotliarov2020 dataset, SCONE exhibited a narrower performance distribution compared to TotalVI's multi-omics model, indicating a more consistent clustering performance. Notably, TotalVI achieved its best performance using only the CITE-seq data, which was also used to define the dataset's "ground truth" labels \cite{kotliarov_broad_2020}. This result suggests that TotalVI's integration strategy may act antagonistically on this dataset, prioritising one omics layer over the other rather than fostering a synergistic interaction. In contrast, SCONE's multi-omics integration strategy enabled a synergistic interaction between omics layers, effectively leveraging complementary information to outperform models utilising only a single omics layer. For example, SCONE’s integration of RNA and CITE-seq data compensated for potential noise in individual layers, allowing it to achieve superior performance compared to models relying solely on the CITE-seq layer.

\begin{figure*}
    \centering
    \ifx\usevectorimages\undefined
    \includegraphics[width=0.9\linewidth]{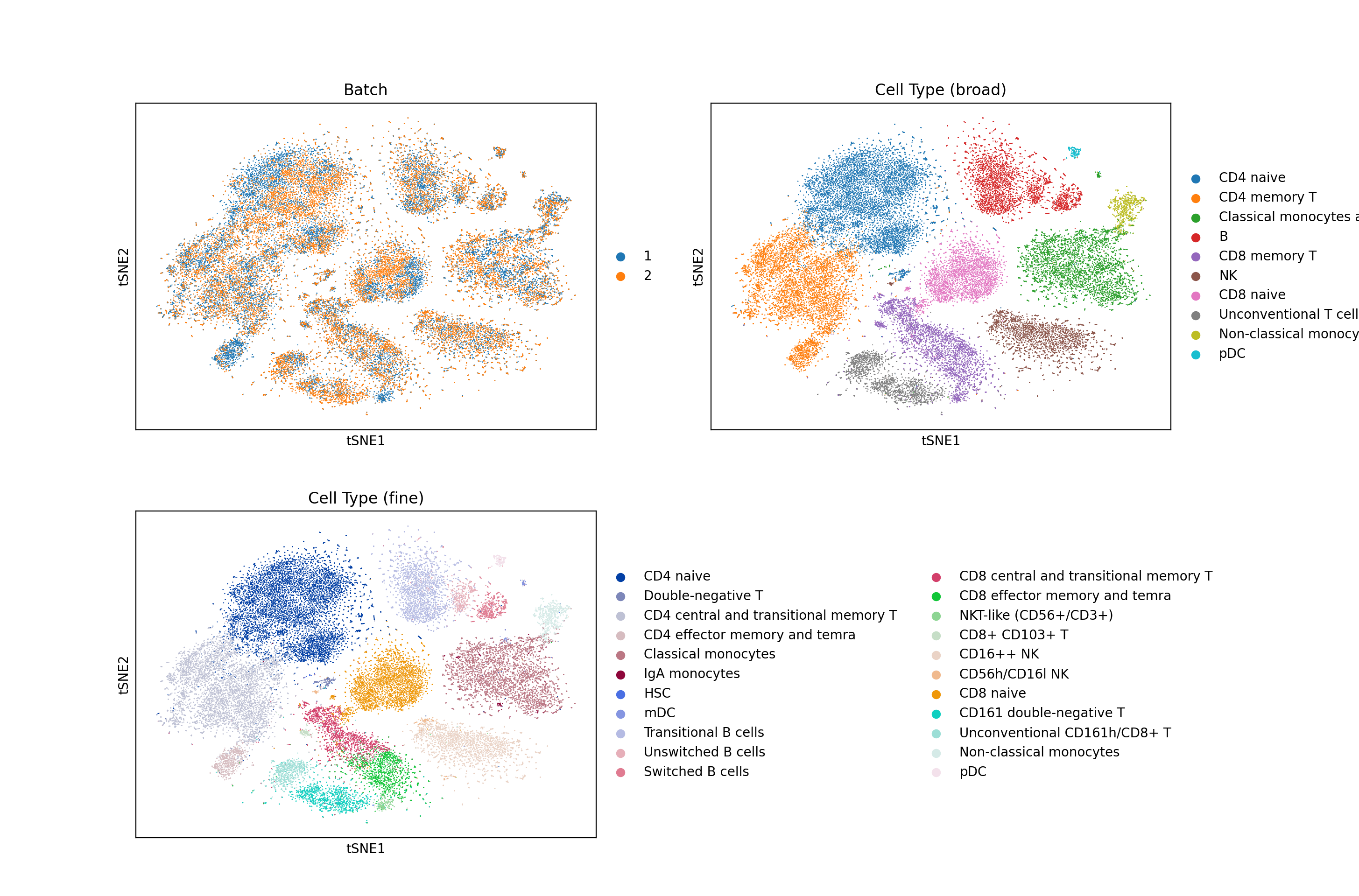}
    \else
    \includegraphics[width=0.9\linewidth]{Fig/tsne-kotliarov2020-subsetcontrastive-celltypes.pdf}
    \fi \hspace{0.3\linewidth}
    \caption{A t-SNE \cite{maaten_visualizing_2008} visualisation of one of our multi-omics model's learned latent representation of the \cite{kotliarov_broad_2020} dataset. On the top left, we overlay the two batches defined by \cite{lotfollahi_multigrate_2021} which correspond to low- and high- vaccine responders. On the top right we overlay the broad cell type ground-truth defined by \cite{kotliarov_broad_2020} through analysis of the surface proteins in their dataset, which we used as our ground-truth for comparisons. In the bottom we provide the fine cell-type annotation provided by \cite{kotliarov_broad_2020} to showcase how our model separates these subtypes within the broader cell-types.  }
    \label{fig:tsne-kotliarov-subtypes}
\end{figure*}

Importantly, SCONE's ability to capture meaningful biological insights is evident in its learned latent manifold. As illustrated in Figure~\ref{fig:tsne-kotliarov-subtypes}, out RNA+CITE SCONE successfully clusters similar cell types, demonstrating clear separation for both the broad and fine cell-type definitions provided by \cite{kotliarov_broad_2020}. In fact, although our single-omics models underperformed relative to TotalVI's single-omics models, our CITE-seq-only model was not the best-performing model within our framework. We hypothesise that the subset sampling procedure employed in our method may have introduced noise, potentially contributing to the lower average performance metrics observed in our single-omics models compared to TotalVI's equivalent models. However, despite this limitation, our multi-omics integration strategy enabled a synergistic interaction between the omics layers, effectively compensating for the noise and leveraging complementary information.

This interaction allowed our model to complement CITE-seq data using information from the RNA omics layer, resulting in superior performance even compared to models utilising only the omics layer that was originally used to derive the ground truth labels for this dataset. This observation is further supported by the visualisation of surface-protein cell-type markers \cite{kotliarov_broad_2020}, as shown in Figure~\ref{fig:tsne-kotliarov-markers}, that demonstrate a clear biological signal for each cell type cluster. Notably, even the hematopoietic stem cell (HSC) markers, which correspond to a cluster comprising only 57 cells (0.11\% of the dataset), remain tightly clustered in the centre of our t-SNE visualisation \cite{maaten_visualizing_2008}, albeit in close proximity to other clusters. This level of granularity underscores the ability of our model to preserve biological information, even for rare cell populations, in its learned manifold.

\begin{figure*}
    \centering
    \includegraphics[width=0.9\linewidth]{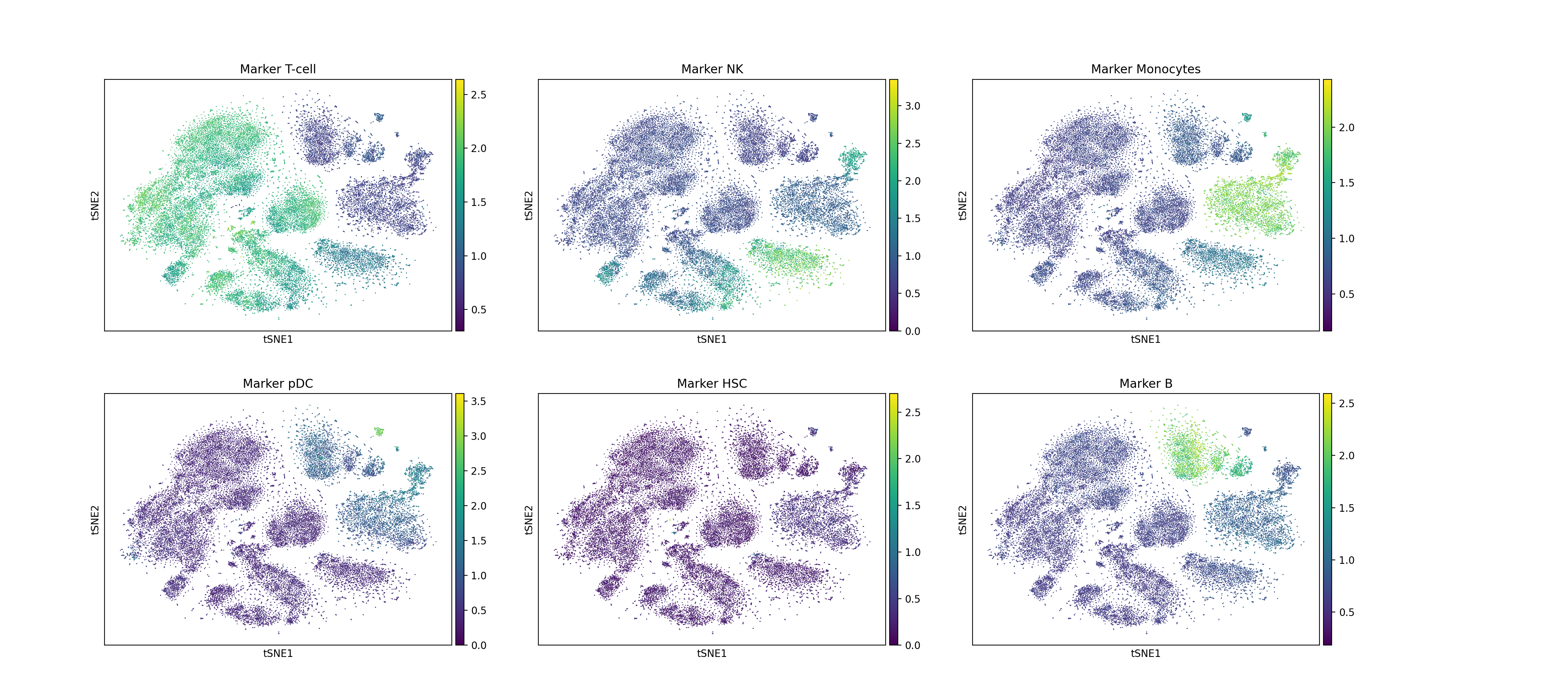} \\
    \caption{A t-SNE \cite{maaten_visualizing_2008} visualisation of one of our multi-omics model's learned latent representation of the \cite{kotliarov_broad_2020} dataset. We overlay the average of the $\log_{10}$ surface-protein expression for the marker proteins of each cell type defined by \cite{kotliarov_broad_2020}. From left-to-right, top-to-bottom we have the average surface-protein expression for T-cells, Monocytes, Natural Killer (NK) cells, plasmacytoid Dendritic Cells (pDC), Hematopoietic stem cells (HSC), and B-cells.
    A more detailed per-marker-protein view can be seen in Supplementary Figures~\ref{sup:fig:marker-t}, \ref{sup:fig:marker-monocyte}, \ref{sup:fig:marker-nk}, \ref{sup:fig:marker-pdc}, \ref{sup:fig:marker-hsc}, and~\ref{sup:fig:marker-b}, which can further show how our integrated topology still retains the crucial surface protein information.
     }
    \label{fig:tsne-kotliarov-markers}
\end{figure*}

In the context of TCGA datasets, SCONE’s single-omics RNA models consistently yielded higher $-\log_{10}(p)$ values compared to protein models, except in the STAD dataset, where protein markers were particularly informative for patient survival. The rationale behind SubtypeMGTP's multi-omics-to-protein translation is that the protein omics layer is the most informative \cite{xie_subtype-mgtp_2024}. Our experiments have shown that this might only be the case specifically in the STAD dataset, where the protein-only SCONE model layer achieved the highest $-\log_{10}(p)$ value. In this dataset, SubtypeMGTP also significantly outperformed all our single-omics models, suggesting that protein markers might be particularly informative for patient survival in the STAD context, leading to SubtypeMGTP’s superior overall performance. However, in all other datasets, our single-omics RNA models consistently yielded higher $-\log_{10}(p)$ values compared to the protein models.

Interestingly, while the multi-omics integration in SCONE did not always result in a purely synergistic interaction, it often provided an advantage over analysing omics layers individually and averaging the results. Synergistic behaviour was observed in only three datasets: BLCA, STAD, and UCEC. In three other datasets, the average performance of our single-omics models (RNA, Protein, and miRNA) was within one standard deviation of the average performance of our multi-omics model. Specifically, in the BRCA dataset the single-omics ensemble was $0.05\sigma$ below the multi-omics model. In the KIRC and LUAD datasets, the single-omics ensemble outperformed the multi-omics model by $0.77\sigma$ and $0.79\sigma$, respectively. Interestingly, in the SKCM dataset, although our RNA-seq-only model exhibited the highest average $-\log_{10}(p)$ value, the single-omics ensemble performed $1.58\sigma$ below the multi-omics model. This suggests that while the multi-omics integration did not always act purely synergistically, it often provided an advantage over analysing the omics layers individually.

\section{Conclusion}

In this study, we introduced the Subset-Contrastive multi-Omics Network Embedding (SCONE) model, which demonstrated strong performance across a range of single- and multi-omics datasets. By leveraging a subgraph-based contrastive learning approach, SCONE achieved synergistic integration of multiple omics layers, enabling it to effectively capture and utilise complementary information. Despite the challenges posed by the subset sampling procedure, SCONE maintained competitive performance, frequently outperforming both network-based and non-network-based baselines.

Our findings highlight the potential of subgraph-based contrastive methods for analysing large-scale molecular datasets. SCONE's ability to integrate diverse omics layers while maintaining computational efficiency positions it as a valuable tool for multi-omics analyses. Furthermore, our results underscore the importance of exploring subset-based techniques, particularly in the context of emerging spatial transcriptomics data, where constraints on data availability and resolution present unique challenges.

We anticipate that SCONE will motivate further research into subset-based contrastive methods and their application to multi-omics and spatial transcriptomics datasets. Future work may focus on refining integration strategies to enhance synergy between omics layers, as well as extending SCONE's framework to accommodate the specific demands of spatially resolved data.

\section*{Acknowledgements} MW and ST acknowledge funding from King's College London and the A*STAR Research Attachment Programme (ARAP) to PHCA. MW acknowledges funding by the AI, Analytics and Informatics (AI3) Horizontal Technology Programme Office (HTPO) seed grant (grant no: C211118015) from A*STAR, Singapore. ST acknowledges funding from the British Skin Foundation (006/R/22) and the UK Royal Society (IES\textbackslash R2\textbackslash 222084). We'd like to thank Sezin Kırcalı Ata for helpful discussions. The results shown here are in whole or part based upon data generated by the TCGA Research Network: \url{https://www.cancer.gov/tcga}

\bibliographystyle{abbrv}
\bibliography{netemo}

\clearpage

\onecolumn

\setcounter{table}{0} \renewcommand\thetable{S\arabic{table}}
\setcounter{figure}{0} \renewcommand\thefigure{S\arabic{figure}}
\setcounter{section}{0} \renewcommand\thesection{S}
\setcounter{subsection}{0} \renewcommand\thesubsection{S\arabic{subsection}}

\section*{Supplementary Material} \label{sup}

\subsection{Experimental Setup} \label{sup:exp}

We selected two strong baselines for comparison: (i) TotalVI \cite{gayoso_joint_2021} and (ii) SubtypeMGTP \cite{xie_subtype-mgtp_2024}. TotalVI is a widely recognised multi-omics single-cell representation learning method, extensively used and integrated into several established and emerging libraries \cite{gayoso_python_2022,virshup_scverse_2023,ergen_scvi-hub_2024}. SubtypeMGTP, on the other hand, is a recently published multi-omics bulk representation learning method that has demonstrated significant improvements over prior techniques. For both baselines, we used the authors' publicly available code and adhered to their recommended settings. We also explored other Graph Contrastive Learning methods \cite{xie_subtype-mgtp_2024,zhang_sczag_2024,wang_contrastive_2024}. However, their high spatial complexity prevented their application to our single-cell datasets.

\subsubsection{Baseline Models}

TotalVI is a multi-modal Variational Autoencoder \cite{kingma_auto-encoding_2014} tailored for RNAseq and CITE-seq (surface protein) data. It employs a standard normal prior and reconstructs RNAseq data using a Negative Binomial (NB) distribution, while protein data is reconstructed using an NB mixture. We utilised the implementation available in the \texttt{scvi} Python package \cite{gayoso_python_2022} (version \texttt{0.14.6}), produced by the same first author as the original TotalVI paper. The latent representation was extracted using the \texttt{get\_latent\_representation()} method of the \texttt{TOTALVI} class.

SubtypeMGTP is a two-phase model. The first phase trains a multi-omics-to-protein translation module using Graph Convolutional Network (GCN) \cite{kipf_semi-supervised_2017} layers on all available protein data. Then it converts all multi-omics measurements into protein representations, which are then used as input for the second phase. In the second phase, SubtypeMGTP builds a Autoencoder model comprising GCN-based Encoder and Decoder, and then applies a Deep Subspace Contrastive Clustering module, along with a contrastive self-expression layer in the latent space. We used the original implementation available at \url{github.com/kybinn/Subtype-MGTP}, with an archived version hosted at \url{github.com/phcavelar/Subtype-MGTP}.

\subsubsection{Proposed Model}

Our model employs mirrored encoders and decoders, with each omics layer's encoder consisting of two GATv2 layers with $256$ and $128$ units, followed by LeakyReLU activation. The decoder mirrors this structure, with 256 units in the first layer and $d_o$ output units. Training was performed using the Adam optimiser \cite{kingma_adam_2015} with $\beta_1=0.9$, $\beta_2=0.999$, and a learning rate of $0.0001$. We used $\alpha=1$ and $\beta=10$ in Equation~\ref{eq:total-loss}. We did not optimise our model's hyperparameters for each experiment, only using different values for $k_{s}$ and number of epochs, as detailed below. For subset sampling, we ensured that at least 10\% of nodes overlapped between subsets ($|s_1 \cap s_2| \geq 0.1 \times k_s$).

Post-training, we performed clustering using either Louvain \cite{blondel_fast_2008} or Leiden \cite{traag_louvain_2019} methods. For Louvain clustering, we used the \texttt{louvain\_communities} function from the \texttt{networkx} library \cite{hagberg_exploring_2008} (version \texttt{3.3}). For Leiden clustering, we used the \texttt{find\_partition} function from the \texttt{leidenalg} library \cite{traag_louvain_2019} (version \texttt{0.9.1}). The resolution parameter was varied over 20 values in the $(0, 2]$ interval, selecting the clustering with the highest modularity. KNN graphs (k=15) were constructed using the \texttt{nn.neighbors} function from the \texttt{scanpy} library \cite{wolf_scanpy_2018} (version \texttt{1.10.2}). 

\subsubsection{Single-Cell Data}

For the single-cell dataset \cite{kotliarov_broad_2020}, we trained each model using RNA and CITE-seq data individually and combined (RNA+CITE). TotalVI was adapted to single-omics scenarios by providing a dummy matrix of zeros for the missing omics layer, with a single column containing zeros for all samples. We used subsets containing 20\% of the dataset ($k_{s} = 0.2 \times n$), corresponding approximately to the proportion of the most abundant cell type, and trained the models for 128 epochs. Both Louvain and Leiden clustering were applied, and results were reported separately.

Cluster assignments were evaluated using the Adjusted Rand Index (ARI) \cite{hubert_comparing_1985} and Adjusted Mutual Information (AMI) \cite{nguyen_information_2010}, calculated using the \texttt{adjusted\_rand\_score} and \texttt{adjusted\_mutual\_info\_score} functions from the \texttt{scikit-learn} library \cite{pedregosa_scikit-learn_2011} (version \texttt{1.4.2}).  The cluster assignments were compared with the original clustering labels provided by \cite{kotliarov_broad_2020} for ARI and AMI calculation, which were used as the performance indicator for each model. 
Cluster assignments were compared to the original labels provided by \cite{kotliarov_broad_2020}. Each model was trained eight times, and clustering was performed on each latent space, yielding eight clusterings per model/clustering method combination. We report the mean and standard deviation of ARI and AMI for each configuration.

\subsubsection{Batch Data}

For Batch data, we followed the preprocessing steps in \cite{xie_subtype-mgtp_2024}. Model performance was assessed using the logrank p-value of survival separation, calculated with the \texttt{multivariate\_logrank\_test} function from the \texttt{lifelines} library \cite{davidson-pilon_lifelines_2024} (version \texttt{0.30.0}). A pre-defined number of clusters for each cancer subtype was used, as defined by \cite{yang_subtype-gan_2021} and shown in Table~\ref{tab:dset-tcga}, for fair comparison. Unlike \cite{xie_subtype-mgtp_2024}, we trained models separately for each cancer subtype, ensuring no information flowed between datasets, and conducted multiple tests for each model. We chose to use a fixed value of $k_{s}=200$ samples per subset for each of the datasets, as this allowed us to have a consistent number of samples and $k_{s}=200$ is close to half the average dataset size (datasets had on average 215 samples).  We trained each model for 1,024 epochs, as smaller sample sizes allowed for faster training compared to the single-cell case.

As with single-cell data, we ran each model 8 times, extracted its latent space, and performed clustering on the latent space. To keep consistent with our single-cell experiments, we only used the equivalent omics layers encoding RNA, Protein, and Micro-RNA when training the model in a multi-omics scenario, also providing the performance for each omics layer when used individually. Since the Leiden clustering algorithm provides an improvement over Louvain \cite{traag_louvain_2019} and is faster to compute, we decided to only perform clustering with Leiden. Since Leiden clustering does not provide a clustering with a fixed number of clusters, we adapted our optimisation procedure and only considered clusterings that had the target number of clusters. After clustering, we calculated the logrank p-value of the clusters' survival separation. For SubtypeMGTP, we follow its second phase and  used spectral clustering with the pre-specified number of clusters for their model. We report the mean and standard deviation of the negative $\log_{10}$ p-value for fair comparison. We also include the highest values we achieved for each of the models since, given our results using the code provided by \cite{xie_subtype-mgtp_2024}, we believe that they reported their best model instead of an average value over multiple runs.

\begin{figure}[hbpt]
    \centering
    \ifx\usevectorimages\undefined
    \includegraphics[width=0.9\linewidth]{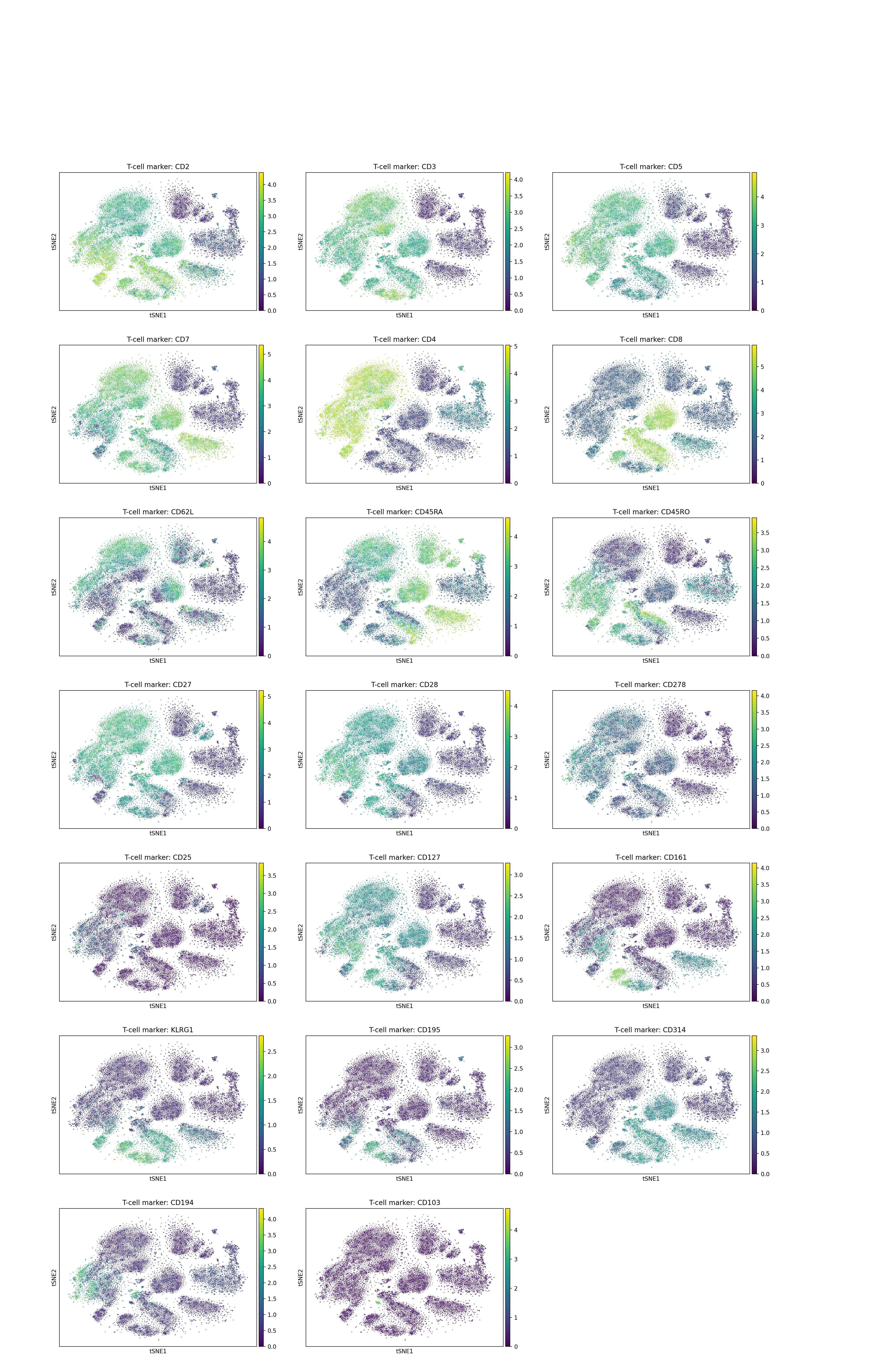}
    \else
    \includegraphics[width=0.9\linewidth]{Fig/Supp/tsne-kotliarov2020-subsetcontrastive-markers-T-cell.pdf}
    \fi
    \caption{A t-SNE \cite{maaten_visualizing_2008} visualisation of one of our multi-omics model's learned latent representation of the \cite{kotliarov_broad_2020} dataset. We overlay the $\log_{10}$ surface-protein expression for the marker proteins of T-cells defined by \cite{kotliarov_broad_2020}.  }
    \label{sup:fig:marker-t}
\end{figure}

\begin{figure}[hbpt]
    \centering
    \ifx\usevectorimages\undefined
    \includegraphics[width=0.9\linewidth]{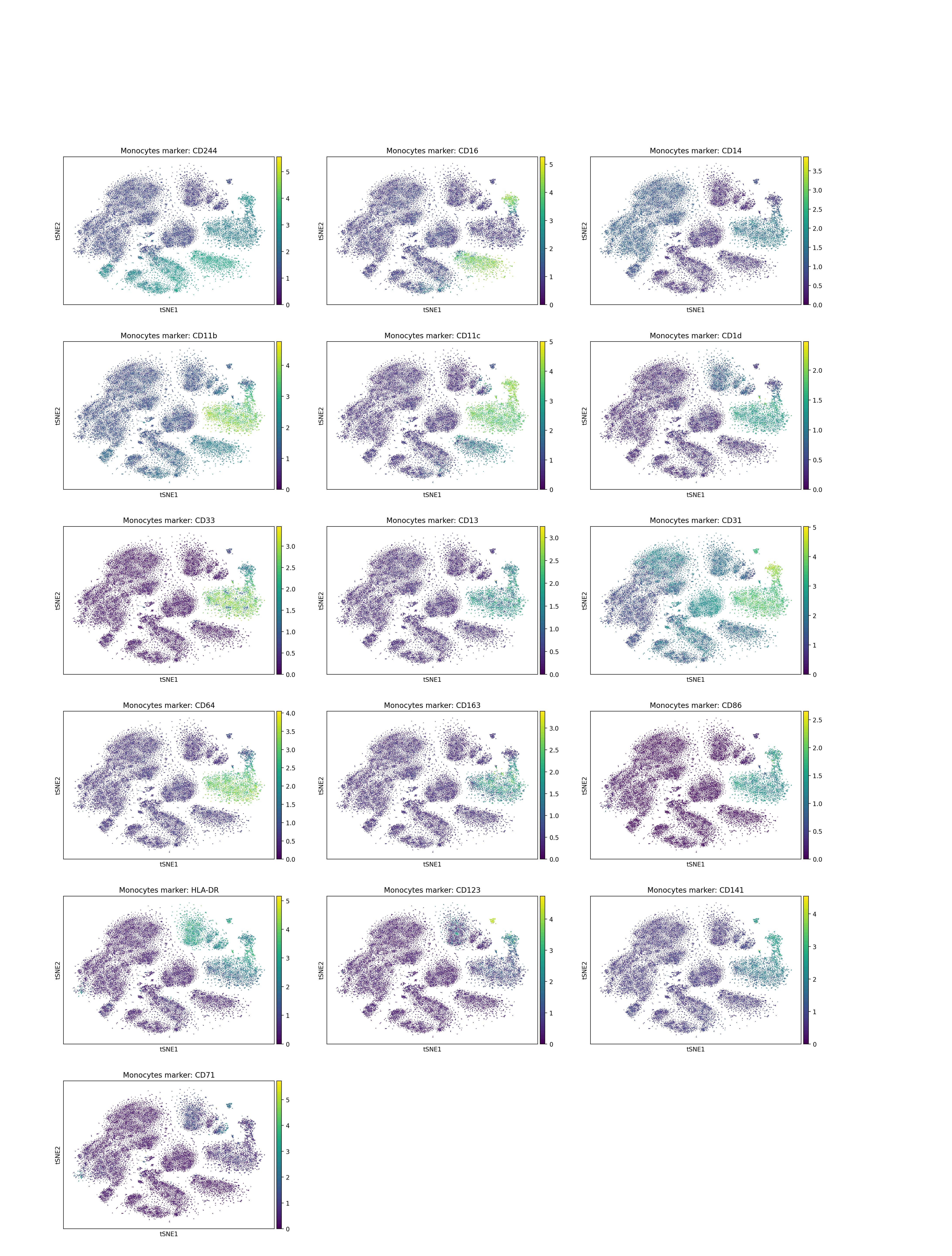}
    \else
    \includegraphics[width=0.9\linewidth]{Fig/Supp/tsne-kotliarov2020-subsetcontrastive-markers-Monocytes.pdf}
    \fi
    \caption{A t-SNE \cite{maaten_visualizing_2008} visualisation of one of our multi-omics model's learned latent representation of the \cite{kotliarov_broad_2020} dataset. We overlay the $\log_{10}$ surface-protein expression for the marker proteins of Monocytes defined by \cite{kotliarov_broad_2020}.  }
    \label{sup:fig:marker-monocyte}
\end{figure}

\begin{figure}[hbpt]
    \centering
    \ifx\usevectorimages\undefined
    \includegraphics[width=0.9\linewidth]{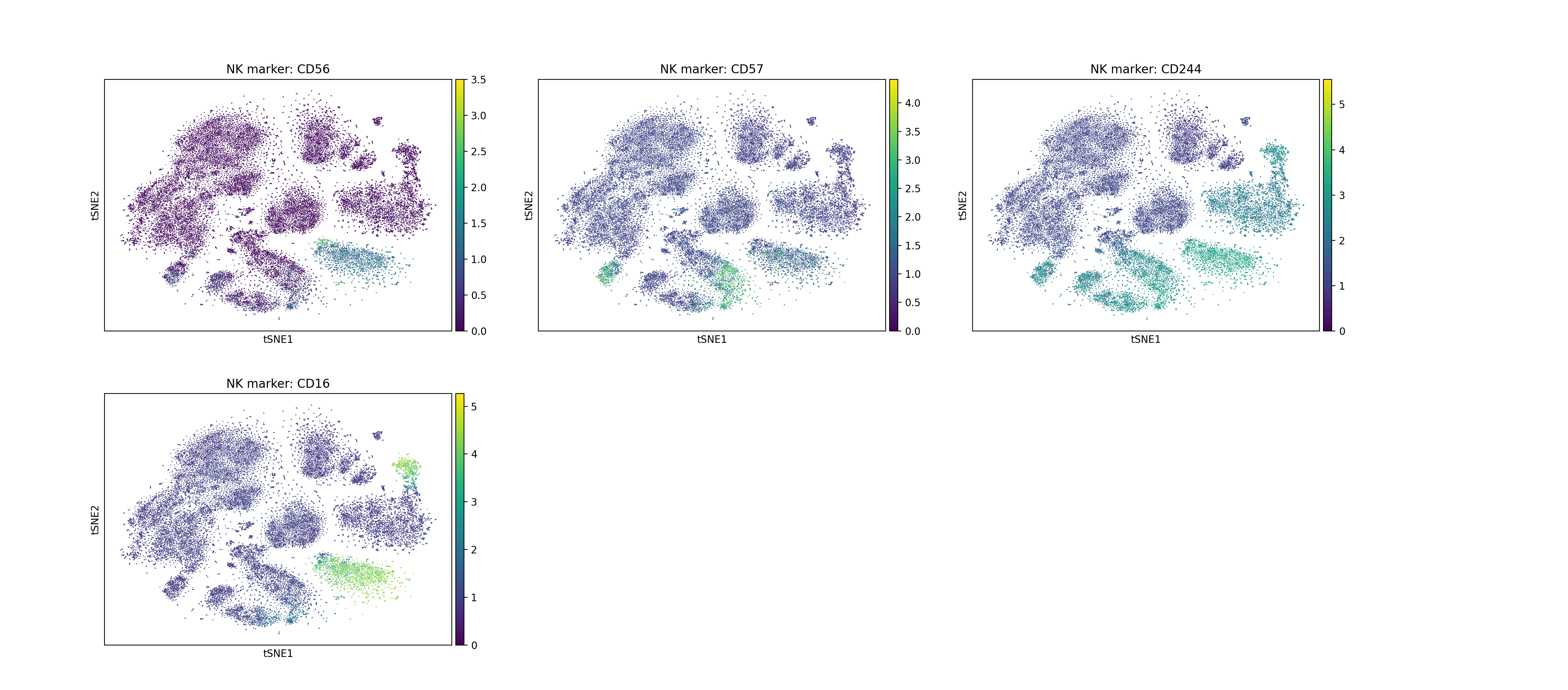}
    \else
    \includegraphics[width=0.9\linewidth]{Fig/Supp/tsne-kotliarov2020-subsetcontrastive-markers-NK.pdf}
    \fi
    \caption{A t-SNE \cite{maaten_visualizing_2008} visualisation of one of our multi-omics model's learned latent representation of the \cite{kotliarov_broad_2020} dataset. We overlay the $\log_{10}$ surface-protein expression for the marker proteins of Natural Killer (NK) cells defined by \cite{kotliarov_broad_2020}.  }
    \label{sup:fig:marker-nk}
\end{figure}

\begin{figure}[hbpt]
    \centering
    \ifx\usevectorimages\undefined
    \includegraphics[width=0.9\linewidth]{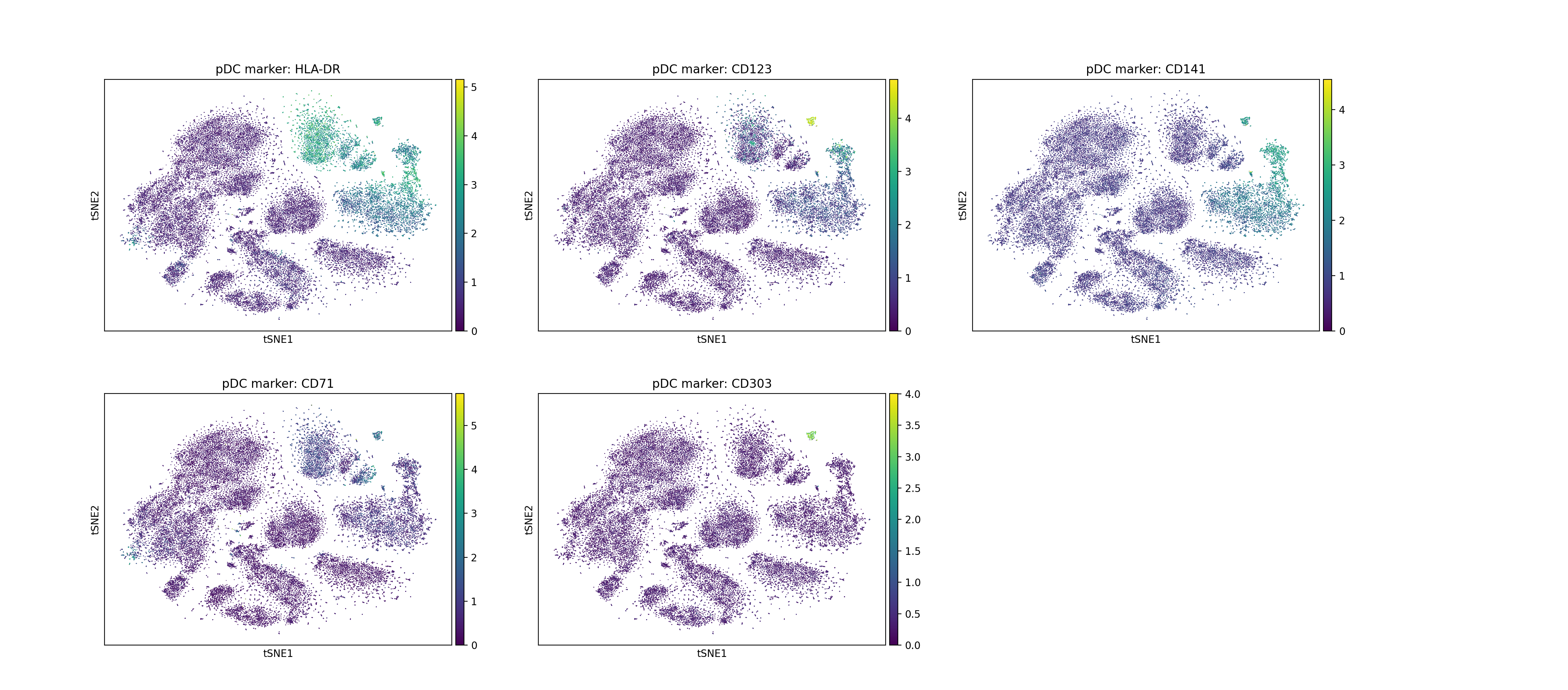}
    \else
    \includegraphics[width=0.9\linewidth]{Fig/Supp/tsne-kotliarov2020-subsetcontrastive-markers-pDC.pdf}
    \fi
    \caption{A t-SNE \cite{maaten_visualizing_2008} visualisation of one of our multi-omics model's learned latent representation of the \cite{kotliarov_broad_2020} dataset. We overlay the $\log_{10}$ surface-protein expression for the marker proteins of pDC cells defined by \cite{kotliarov_broad_2020}.  }
    \label{sup:fig:marker-pdc}
\end{figure}

\begin{figure}[hbpt]
    \centering
    \ifx\usevectorimages\undefined
    \includegraphics[width=0.6\linewidth]{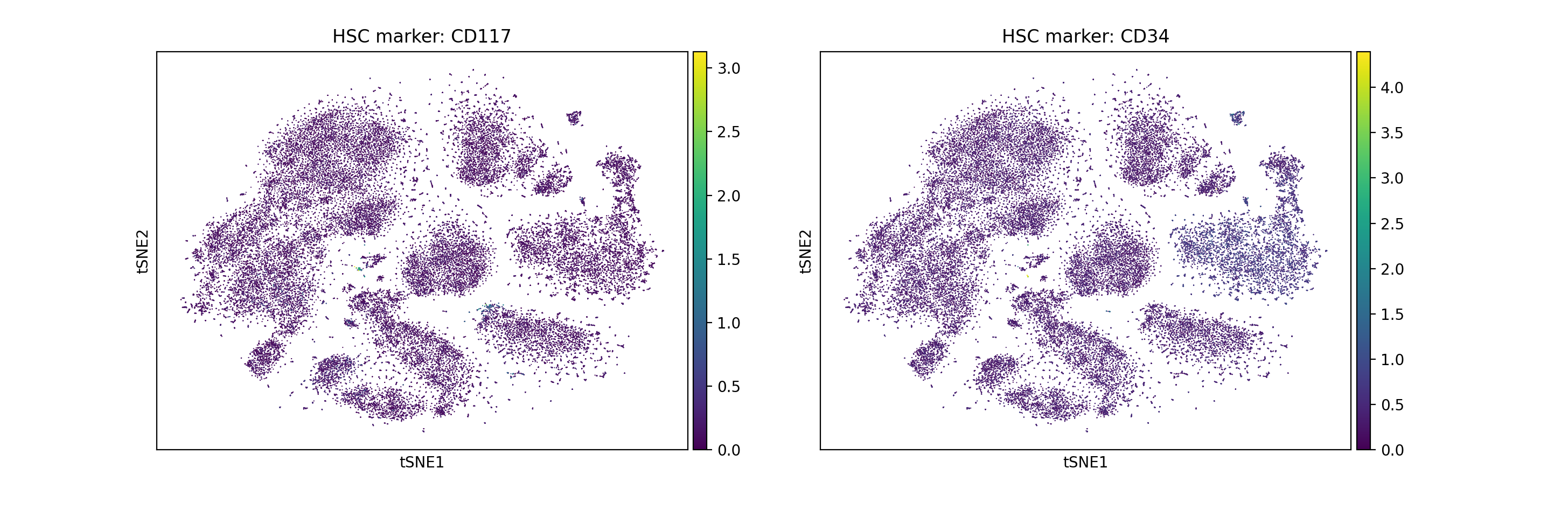}
    \else
    \includegraphics[width=0.6\linewidth]{Fig/Supp/tsne-kotliarov2020-subsetcontrastive-markers-HSC.pdf}
    \fi \hspace{0.3\linewidth}
    \caption{A t-SNE \cite{maaten_visualizing_2008} visualisation of one of our multi-omics model's learned latent representation of the \cite{kotliarov_broad_2020} dataset. We overlay the $\log_{10}$ surface-protein expression for the marker proteins of HSC cells defined by \cite{kotliarov_broad_2020}.  }
    \label{sup:fig:marker-hsc}
\end{figure}

\begin{figure}[hbpt]
    \centering
    \ifx\usevectorimages\undefined
    \includegraphics[width=0.9\linewidth]{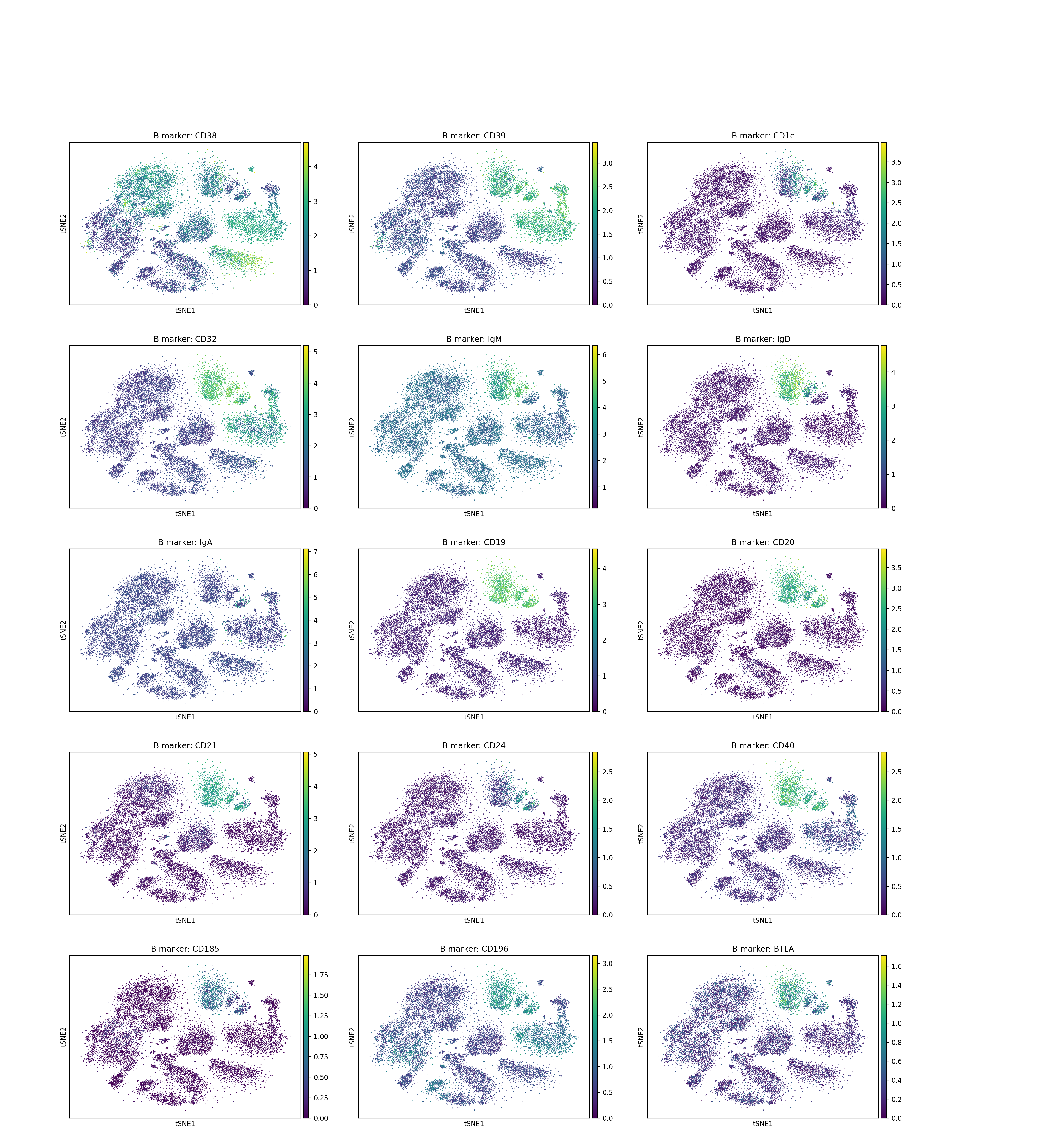}
    \else
    \includegraphics[width=0.9\linewidth]{Fig/Supp/tsne-kotliarov2020-subsetcontrastive-markers-B.pdf}
    \fi
    \caption{A t-SNE \cite{maaten_visualizing_2008} visualisation of one of our multi-omics model's learned latent representation of the \cite{kotliarov_broad_2020} dataset. We overlay the $\log_{10}$ surface-protein expression for the marker proteins of B-cells defined by \cite{kotliarov_broad_2020}.  }
    \label{sup:fig:marker-b}
\end{figure}

\end{document}